\documentclass[runningheads]{llncs}
\usepackage[color=gray!30,textsize=tiny]{todonotes}
\usepackage[T1]{fontenc}

\usepackage[space, compress, sort]{cite}

\usepackage{amssymb} 
\usepackage{amsfonts}
\usepackage{amsthm}
\usepackage{wrapfig}
\usepackage[ruled,vlined]{algorithm2e}
\usepackage{mathtools}
\usepackage{tabularx}
\usepackage{graphicx}
\usepackage{enumerate}
\usepackage{float}
\usepackage{url}
\usepackage{verbatim}
\usepackage{booktabs} 
\usepackage{multirow}
\usepackage[linkcolor=blue,citecolor=blue,urlcolor=blue,colorlinks=true]{hyperref}
\usepackage{bm}
\usepackage{algpseudocode}

\SetKwRepeat{Do}{do}{while}

\begin{document}

\mainmatter

\title{STAR-Filter: Efficient Convex Free-Space Approximation via Starshaped Set Filtering in Noisy Environments
}
\titlerunning{STAR-Filter for Convex Free-Space Approximation} 

\author{Yuwei Wu\textsuperscript{1}, Yichen Zhao\textsuperscript{2}, Dexter Ong\textsuperscript{1},  Vijay Kumar\textsuperscript{1}}
\authorrunning{Y. Wu et al.} 
\institute{\textsuperscript{1}University of Pennsylvania, Philadelphia, PA, 19104 USA\\
\email{\{yuweiwu, odexter, kumar\}@seas.upenn.edu} \\
\textsuperscript{2}Georgia Institute of Technology, Georgia, GA 30332, USA\\
\email{yzhao654@gatech.edu} \\
}

\maketitle

\begin{abstract}
Approximating collision-free space is fundamental to robot planning in complex environments.
Convex geometric representations, such as polytopes and ellipsoids, are widely employed due to their structural properties, which can be easily integrated with convex optimization.
Iterative optimization-based inflation methods can generate large volume polytopes in cluttered environments, but their efficiency degrades as the obstacle set becomes more complex or when sensor data are noisy. 
These methods are also sensitive to initialization and often rely on accurate geometric models.
In this paper, we propose the STAR-Filter, a lightweight framework that employs starshaped set construction
as a fast filter for convex region generation in collision-free space. 
By identifying obstacle points as active supporting constraints, the proposed method significantly reduces redundant computation while preserving feasibility and robustness to sensor noise.  
We provide theoretical and numerical analyses that characterize the structural properties of the starshaped set and proposed pipeline in environments of varying complexity.
Simulation results show that the proposed framework achieves the lowest computation time and reduces conservativeness in polytope generation for real-world noisy and large-scale data.
We demonstrate the effectiveness of the framework for Safe Flight Corridor (SFC) generation and agile quadrotor planning in noisy environments.
\keywords{Starshaped construction, Convex region generation, Collision avoidance}

\end{abstract}

\section{Introduction}

Effective motion planning relies on compact and computationally tractable representations of collision-free space for collision avoidance~\cite{lavalle2006planning, doi:10.1177/027836498400300403}.
For onboard systems that rely on real-time sensory data, the choice between obstacle-centric and free-space-centric parameterizations, as well as the selection of the specific form of free-space representation, critically affects the efficiency, robustness, and scalability of planning and control algorithms. 
Direct discretized representations, such as occupancy grids, signed distance fields, and related volumetric maps, are widely used for localization and mapping~\cite{8202315}.
However, their computational and memory inefficiency often limits their ability to perform real-time motion planning tasks.
Geometric representations, such as convex or starshaped decomposition or extraction~\cite{5152628, 10.1145/1236246.1236265, 10.1145/800076.802459, 5509263, 10164805}, are commonly employed to enable efficient collision checking and to formulate collision-free constraints within motion planning frameworks. 
Depending on the input data (e.g., pre-existing map, raw sensor measurements), a suitable representation of the free space should balance geometric expressiveness, computational tractability, manageability, and the ability to incorporate incremental sensory updates in real time~\cite {10970076}.

While obstacle-centric modeling is prevalent in computational geometry and computer vision, motion planning often benefits more from direct free-space decomposition, which is typically more compact and better suited to trajectory optimization~\cite{7487283, marcucci2024fast, 9102390, 9120168, 7784290, 9561460, 9812306,  10802782, 9982032, werner2024approximating}.
However, constructing such representations remains inherently challenging, as the continuous free space generally lacks an explicit geometric characterization unless the environment is discretized.
Therefore, most methods approximate free space indirectly by excluding the entire obstacle set, although it can be conservative and may exclude larger feasible convex regions.
Polytope-based approximations of the collision-free space are widely adopted in trajectory-planning frameworks, e.g.,  Graphs of Convex Sets (GCS)~\cite{doi:10.1137/22M1523790}, Safe Flight Corridors (SFC)~\cite{7839930}, and  Forward Reachable Sets (FRS)~\cite{kousik2019safe}.
These convex representations can be overly conservative in nonconvex environments, typically requiring multiple convex sets to achieve adequate coverage. To remedy this, starshaped set representations~\cite{beltagy2000convex} provide a natural alternative that aligns well with ray-based sensing modalities such as LiDAR and depth cameras.
Since starshaped sets directly encode visibility from a reference point, they offer a more expressive geometric model of locally observable free space and can reduce the need for an excessive spatial decomposition.
Prior work has utilized starshaped polytopes as intermediate representations for convex polytope generation~\cite{zhong2020generating} or applied them to visibility-aware aerial inspection and exploration~\cite{liu2022star}.

In this paper, we present the STAR-Filter, a lightweight obstacle filtering framework to accelerate convex polytope generation for collision-free space approximation. 
STAR-Filter leverages starshaped set construction to identify a small subset of obstacle points that directly limit the size of the local free space for restricted convex polytope inflation.
This step significantly reduces computational cost while preserving query seed containment and maintaining collision avoidance in the resulting convex region.
We analyze the geometric properties of the induced starshaped set and show that its radial extreme points provide an effective approximation of the local free-space boundary relevant for ellipsoid inflation. 
The framework is evaluated in both simulated environments and real-world datasets, demonstrating robustness to sensor noise and effectiveness for agile motion planning tasks.

\textbf{Contributions.}
(1) We introduce a starshaped, visibility-based filtering of obstacle point clouds and demonstrate the correspondence between extreme points of the starshaped set and the active constraints of the convex polytope.
(2) We leverage this structure to enable fast convex polytope generation, reducing computational cost while preserving feasibility.
(3) We validate the proposed method through extensive simulations and benchmark comparisons on real-world datasets from diverse sensor sources (e.g., LiDAR and stereo event camera).

\section{Related Works}
Conventional geometric map decomposition methods approximate the environment using convex components at varying levels of fidelity~\cite{10.1145/800076.802459,10.1145/1236246.1236265,wei2022approximate,yasser2016,mahroo2021}.
In robotics, particularly for aerial vehicles, sequences of overlapping convex polytopes are widely adopted~\cite{7487283,marcucci2024fast,9982032,9812306,werner2024approximating,9102390}, as they support efficient trajectory optimization and graph-based global planning.
These convex corridors support piecewise trajectory optimization and efficient graph-based global planning.
Other common choices include axis-aligned boxes~\cite{7487283,marcucci2024fast}, spheres and ellipsoids~\cite{7784290, 9561460, 9812306,  10802782}, rectangular pyramid (RAPPIDS)~\cite{9120168}, and convex polytopes~\cite{9982032,werner2024approximating}, typically constructed from point clouds or occupancy maps.

To extract convex regions from nonconvex environments, \cite{Deits14computinglarge} proposed an algorithm of Iterative Regional Inflation by Semidefinite Programming (IRIS) to generate maximum volume polytopes around convex obstacles given seed points. 
The iterative optimization procedure involved two phases: sequential quadratic programming (SQP) to compute separating hyperplanes with respect to obstacles, followed by semidefinite programming (SDP) to determine the inscribed maximum-volume ellipsoid within this polytope.
However, due to the high computational cost of this approach, subsequent work has focused on improving its efficiency.
The work in~\cite{7839930} proposed a non-iterative version of IRIS and used line segments as an initial guess of the ellipsoid axes. 
Fast iterative region inflation (FIRI) in \cite {10970076} further optimizes each phase and extends it to include a given convex polytope. 
Given voxel maps as inputs,~\cite{9102390} proposed parallel convex inflation of axis-aligned cubes along axis directions by incrementally checking voxel points, which can achieve a resolution near real-time.
An alternative approach for constructing convex regions is proposed in~\cite{7998590}, where they introduce a stereographic projection that maps obstacle points or point clouds into a transformed space and converts them into a convex hull.
The resulting free region is often overly conservative for practical applications, and this limitation is addressed in~\cite{zhong2020generating} by using sphere flipping to generate starshaped regions that are subsequently modified into convex polytopes. 
These methods demonstrate efficient polytope generation and show potential for online planning.

For dynamic or local map construction from raw sensor data, free-space representations should support low-latency incremental updates, remain robust under noisy sensing, and scale efficiently in memory.
Authors in~\cite{9981447} introduce a framework for global polytope map construction with dynamic updates under environmental changes.
However, repeated decomposition of existing polytopes often results in fragmentation into many small regions, reducing overall representational and computational efficiency. 
Other work, such as \cite{latha2024} and \cite{feng2024}, applies starshaped sets to build a motion graph for navigation in 2-dimensional (2D) navigation problems, and in~\cite{10594730}, deformable polygons are used as free-space representations for radar-based sensing in the 2D setting.  
We differ by using a sphere flipping map to filter the noisy raw data, and supporting both 2D and 3D workspaces while extending nonconvex representations to higher dimensions. 
Our approach maintains robustness, computational efficiency, and online updates, which remain challenging for most methods.
Another line of research jointly considers perception and planning by explicitly incorporating state estimation uncertainty into the planning process, which places additional demands on the underlying free-space representation.
For example,~\cite{9863844} integrates safe regions with perception-aware costs to mitigate estimation uncertainty, while~\cite{agrawal2025certifiably} uses odometry drift to refine SFCs.
While the proposed framework focuses on collision-free space generation, it remains compatible with drift-compensated approaches and can be integrated into onboard autonomy pipelines.

\section{Prerequisite}

Let $\mathcal{W} \subset\mathbb{R}^{n}$ denote the workspace and $\mathcal{O}\subset \mathcal{W}$ be a closed obstacle set.
The collision-free space is defined as $ \mathcal{F} \coloneqq \mathcal{W} \setminus \mathcal{O}$.
We consider a robot operating in $\mathbb{R}^n$, with $n\in\{2,3\}$, equipped with a ray-casting sensor (e.g., LiDAR or depth camera) rigidly mounted at the robot origin $q\in\mathbb{R}^n$.
A sensor scan yields a point cloud $X=\{x_i\}_{i=1}^N \subseteq \mathcal{O}$.
Each point $x_i$ corresponds to the first occupied measurement along its sensing ray, implying that every point on the line interval $[q,x_i)$ is collision-free.
Consequently, the visibility region induced by the scan,
$V(q,X) \coloneqq \bigcup_{x_i\in X}[q,x_i)
= \{q+\lambda(x_i-q)\mid x_i\in X,\ \lambda\in[0,1)\}$ is contained in $\mathcal{F}$.
We now recall the notion of starshaped sets.
\begin{definition}[Starshaped set]\label{def:star_cvx}
A set $S\subset\mathbb{R}^n$ is starshaped with respect to $q$ if
\[
x\in S \;\Longrightarrow\; (1-s)q+sx\in S,\quad \forall s\in[0,1].
\]
\end{definition}

A starshaped set w.r.t. any point $q\in \mathcal{F}$ is always a valid conservative approximation for $\mathcal{F}$. Accordingly, we aim to find a starshaped region $S\subseteq \mathcal{F}$ to under-approximate $\mathcal{F}$ while preserving geometric properties. 
The maximal starshaped region w.r.t. $q$ is $S_{\max}(q)=\{x\in \mathcal{F}\mid[q,x]\subseteq \mathcal{F}\}$ can be approximated given a point cloud $X$ which satisfies $S_{\max}(q)\supseteq V(q,X)$.

Several inverse mappings (e.g., Kelvin inversion) preserve radial directions toward the query point (e.g., the robot origin), so the inverse-mapped points form a starshaped set.
However, the numerical stability can vary significantly in practice.
We now revisit starshaped construction in~\cite{zhong2020generating,liu2022star} via sphere flipping that has been implemented with point cloud inputs and formalize this approach. 
Consider the map
\begin{equation}~\label{eq:spherical_flip}
   f_R(x)=x-q+2\big(R-\|x-q\|\big)\frac{x-q}{\|x-q\|}, \quad \forall x\in \mathbb{R}^n\setminus \{ q\},
\end{equation}
and if $x=q,f_R(x) = q$ such that $f_R$ is well-defined everywhere. Given a point cloud $X$, we define the map such that $R \ge \sup_{x_i\in X}\|x-q\|$ so the point cloud gets flipped outside of the sphere.
Moreover, $f^{-1}_R(f_R(x))=\operatorname{Id}(x)=x$, where $f_R^{-1}=f_R$.
We show that~\eqref{eq:spherical_flip} yields a starshaped set that preserves visibility.

\begin{figure*}[!t]
      \centering
    \includegraphics[width=1\columnwidth]{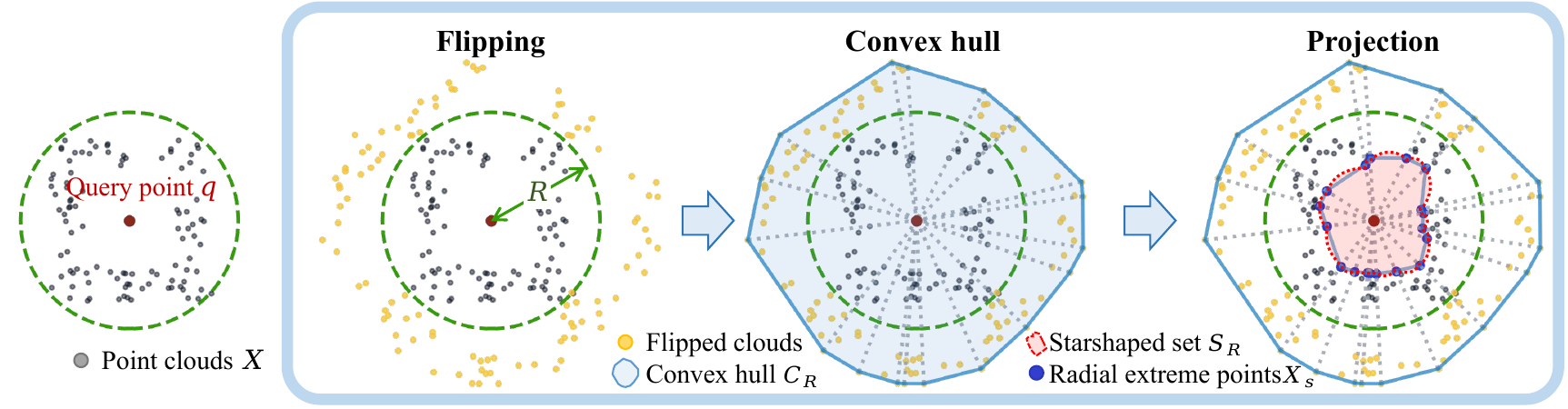}
    \vspace{-0.6cm}
      \caption{Generation of a collision-free starshaped set from point clouds and a query point. Blue points are the extreme points of the starshaped set. }
      \label{fig: p2}
      \vspace{-0.1cm}
\end{figure*}

\begin{proposition}[Starshaped set from sphere flipping]
\label{prop:flip-star}
Let $X=\{x_i\}_{i=1}^N$ be a finite set of points, Let $\bar{B}(q,2R)\coloneqq\{x\in\mathbb{R}^n\mid \|x-q\|\leq 2R\}$ be the closed ball that contains all the flipped points under~\eqref{eq:spherical_flip}, and define
\[
\mathcal{C}_R \coloneqq \operatorname{conv}\bigl(f_R(X\cup\{q\})\bigr),
\qquad
\mathcal S_R \coloneqq \overline{\bar{B}(q,2R)\setminus f_R^{-1}(\mathcal C_R)\,}.
\]
for some $R\geq \sup_{x_i\in X}\|x_i-q\|$, where $\operatorname{conv}$ denotes the convex hull, and $f^{-1}(\mathcal{C}_R)$ denotes the preimage of the set $\mathcal{C}_R$. Then $\mathcal{S}_R$ is starshaped with respect to $q$.
\end{proposition}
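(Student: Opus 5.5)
Translating coordinates so that $q=0$, we write every $x\neq q$ as $x=q+r u$ with $r=\|x-q\|>0$ and $\|u\|=1$. With this notation \eqref{eq:spherical_flip} reads $f_R(q+ru)=q+(2R-r)u$. The plan is to describe everything ray by ray: $f_R$ maps each ray $\{q+tu: t>0\}$ to itself (for $r<2R$) by the radial reflection $r\mapsto 2R-r$. We then show that, along every ray, the set $\bar B(q,2R)\setminus f_R^{-1}(\mathcal C_R)$ is an initial segment starting at $q$. Finally we pass to the closure.

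\emph{Step 1 (radial structure of $\mathcal C_R$).} Since $f_R(q)=q$, we have $q\in\mathcal C_R$. For each $x_i\in X$ we have $\|x_i-q\|\le R$, so $\|f_R(x_i)-q\|=2R-\|x_i-q\|\in[R,2R)$. Because $\bar B(q,2R)$ is convex, this gives $\mathcal C_R\subseteq \bar B(q,2R)$. Since $\mathcal C_R$ is convex, compact and contains $q$, its intersection with each ray is a segment $\{q+tu:0\le t\le \rho(u)\}$ for some $\rho(u)\in[0,2R]$; this is the radial function of $\mathcal C_R$ about $q$.

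\emph{Step 2 (the preimage inside the ball).} Take $x=q+su$ with $0<s\le 2R$. Then $f_R(x)=q+(2R-s)u$ with $2R-s\ge 0$, and this point lies on the same ray as $x$ (or equals $q$ when $s=2R$). Hence $x\in f_R^{-1}(\mathcal C_R)$ if and only if $2R-s\le\rho(u)$, that is, $s\ge 2R-\rho(u)$. Consequently
\[
\bigl(\bar B(q,2R)\setminus f_R^{-1}(\mathcal C_R)\bigr)\cap\{q+su:s>0\}=\{q+su:0<s<2R-\rho(u)\}.
\]
Points with $s>2R$ would be reflected to the opposite ray, but they lie outside the ball and can be ignored. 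The point $q$ itself lies in the preimage and is therefore removed. So the unclosed set $T\coloneqq \bar B(q,2R)\setminus f_R^{-1}(\mathcal C_R)$ is a union of half-open segments $(q,\,q+(2R-\rho(u))u)$, possibly empty for directions with $\rho(u)=2R$. Thus $T\cup\{q\}$ is starshaped with respect to $q$: if $x=q+su\in T$ and $s'\in(0,1]$, then $q+s's\,u$ satisfies $0<s's<2R-\rho(u)$.

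\emph{Step 3 (closure).} Define $\mathcal S_R=\overline{T}$. Take $x\in\mathcal S_R$ and $\lambda\in[0,1]$, and pick $x_k\in T$ with $x_k\to x$. For $\lambda>0$ we have $(1-\lambda)q+\lambda x_k\in T$ by Step 2, and these points converge to $(1-\lambda)q+\lambda x$, which therefore lies in $\mathcal S_R$. For $\lambda=0$ the point is $q$, which lies in $\overline T$ as the limit of $(1-1/k)q+(1/k)x_k\in T$. The degenerate case $T=\emptyset$ is trivially starshaped. This shows that $\mathcal S_R$ is starshaped with respect to $q$.

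The main point of care is Step 2. The claim ``$f_R$ preserves rays'' only holds for radii below $2R$, and the point $q$ (and points at radius exactly $2R$) must be handled separately; this bookkeeping is why the statement takes a closure. The convexity of $\mathcal C_R$ enters only through the segment structure established in Step 1.
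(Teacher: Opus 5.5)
Your proof is correct and follows the paper's route. Both reduce to $q=\mathbf{0}$, use that $f_R$ acts on each ray as $r\mapsto 2R-r$, describe $\mathcal C_R$ by its radial function $\rho$, and conclude that the unclosed set is an initial segment on every ray. Your Step 3 is more careful than the paper, which asserts $\mathcal S_R\cap\ell_v=\{rv\mid 0\le r\le 2R-\rho(v)\}$ after the closure. That identity can fail when $q\in\partial\mathcal C_R$, where $\rho$ is discontinuous and the closure picks up longer segments from neighbouring rays. Your limit argument, that the closure of a set starshaped about $q$ is again starshaped, avoids this.
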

\begin{proof}
Without loss of generality, let $q=\mathbf{0}$, then the map becomes 
\begin{equation}~\label{eq:reduced_sphere_flip}
f_R(x)=\frac{2R-\|x\|}{\|x\|}x.
\end{equation}
Expressing $x=rv$ with $r>0$ and $v\in\mathbb{S}^{n-1}$ where $\mathbb{S}^{n-1}\coloneqq\{x\in\mathbb{R}^n\mid\|x\|=1\}$ is the unit sphere,~\eqref{eq:reduced_sphere_flip} can then be written as $f_R(rv)=(2R-r)v$, which shows that the sphere flipping map preserves each ray and extends it to the other side of the sphere. Since $\sup_{x_i\in X}\|x_i\|\leq R$, we have 
$R\leq\|f_R(x_i)\|<2R$, so \(f_R(X)\subset \bar{B}(\mathbf{0},2R)\). Because \(\bar{B}(\mathbf{0},2R)\) is convex and contains the origin, it follows that
$\mathcal{C}_R\subset \bar{B}(\mathbf{0},2R)$. 

Fix \(v\in\mathbb{S}^{n-1}\), and let $\ell_v=\{tv \mid t\ge 0\}$. Since $\mathcal{C}_R$ is convex and $\mathbf{0}\in\mathcal C_R$, there exists $\rho(v)\in[0,2R]$ such that
\[
\mathcal{C}_R\cap \ell_v=\{tv\mid0\le t\le \rho(v)\},
\]
where $\rho\coloneqq\max\{t\in[0,2R]\mid tv\in\mathcal{C}_R\}$. Therefore,
\[
f_R^{-1}(\mathcal C_R)\cap \ell_v
=
\{rv \mid 2R-\rho(v)\le r\le 2R\}.
\]
Taking the complement in $B(\mathbf{0},2R)$ and then taking the closure gives
\[
\mathcal S_R\cap \ell_v
=
\{rv \mid 0\le r\le 2R-\rho(v)\}.
\]
Thus $\mathcal{S}_R$ is an initial radial segment on every ray from the origin. Hence, if \(x\in\mathcal S_R\), then $[\mathbf{0},x]\subset\mathcal{S}_R$. Therefore $\mathcal{S}_R$ is starshaped with respect to $\mathbf{0}$, and thus with respect to $q$ in the sense of Definition~\ref{def:star_cvx}.
\end{proof}

Proposition~\ref{prop:flip-star} shows that $\mathcal{S}_R$ is a starshaped set, and its inscribed polytope is also starshaped with extreme points $X_s$ as in Fig.~\ref{fig: p2}.
We leverage the expressiveness of the starshaped set in Section~\ref{sec:approach} to find an inscribed convex polytope.

\section{Problem Formulation}

We consider the general case in which obstacles are represented by a finite point set.
Obstacles may be represented either as point sets or as polytopes; for polytopic obstacles, it suffices to consider only their vertices.
Given a finite point set $X=\{x_i\}_{i=1}^N \subset \mathbb{R}^n$ to be excluded and a required inclusion query point $q\in\mathbb{R}^n$, we aim to find a convex polytope $\mathcal{P}\subset \mathcal{F}$ with maximum volume that contains $q$ while excluding all obstacle points from its interior:
\begin{subequations}\label{eq:main-problem}
\begin{align}
\max_{\mathcal{P}} \quad & \operatorname{vol}(\mathcal{P}) \label{obj:vol}\\
\text{s.t.}\quad 
& x_i \notin \operatorname{int}(\mathcal{P}), \quad  \forall i \in \{1,\dots,N\}  \quad q \in \mathcal{P}.
\end{align}
\end{subequations}
We transform the polytope towards the query point $q$ by defining the $\mathcal{ \tilde{P}}=\mathcal{P}-q$, so that $\mathbf{0}\in\mathcal{ \tilde{P}}$. 
We then obtain the polytope in the normalized H-representation
\begin{equation}\label{eq:Qdef}
\tilde{\mathcal{P}}=\left\{ y \in \mathbb{R}^n \;\middle|\;
a_j^\top y \le 1,\; \forall j \in \{1,\dots,m\}
\right\}.
\end{equation}
where $\{a_j\}_{j=1}^m \subset \mathbb{R}^n$. 
We assume that $m$ is chosen so that $\mathcal{ \tilde{P}}$ is full-dimensional and bounded. This representation ensures $q\in\mathcal{P}$ by construction.
Let $y_i = x_i - q$, the condition that $x_i$ does not lie strictly inside $\mathcal{P}$ is equivalent to the existence of at least one facet that separates $y_i$, i.e.,
\begin{equation}\label{eq:exclude}
y_i \notin \operatorname{int}(\mathcal{ \tilde{P}})
\quad\Longleftrightarrow\quad
\exists j\in\{1,\dots,m\} \text{ such that } a_j^\top y_i \ge 1.
\end{equation}
We introduce $z_{ij}\in\{0,1\}$ indicating whether separating plane $j$ certifies the exclusion of point $x_i$, and a sufficiently large constant $M>0$, the condition \eqref{eq:exclude} can be rewritten as
\begin{equation}\label{eq:exclude-mip}
\begin{aligned}
\exists z_i \in \{0,1\}^m \ \text{s.t.}\quad
& a_j^\top y_i \ge 1 - M(1-z_{ij}), && \forall j \in \{1,\dots,m\}, \\
& \sum_{j=1}^m z_{ij} \ge 1.
\end{aligned}
\end{equation}
For the objective \eqref{obj:vol}, exact volume computation over a convex polytope is computationally intractable in general.
Hence, we adopt a standard objective of maximizing the volume of an inscribed ellipsoid (inner Löwner–John ellipsoid).
We introduce an ellipsoid $\mathcal{E}
=\left\{ y\in\mathbb{R}^n \,\middle|\,
y = \mu + D \tilde{y},\ \ \| \tilde{y} \| \le 1
\right\} $, centered at $\mu\in\mathbb{R}^n$, where $D \in \mathbb{R}^{n\times n}$ is invertible.
The volume of $\mathcal{E}$ is proportional to maximizing $\log \det D$,
which provides a tight geometric bound on the polytope volume and has a tractable formulation.
Although the polytope $\mathcal{ \tilde{P}}$ is origin-centered, we do not fix the ellipsoid center $\mu$, which is optimized jointly with $D$.
For each halfspace $a_j^\top y \le 1$, the containment condition $\mathcal{E} \subseteq \tilde{\mathcal{P}}$ requires
\begin{equation}\label{eq:containment}
\sup_{\|\tilde{y}\|\le 1} a_j^\top(\mu + D \tilde{y})
=
a_j^\top \mu + \|D^\top a_j\|
\le 1,
\qquad \forall j \in \{1,\dots,m\}.
\end{equation}
We formulate the coupled problem as the following mixed-integer program
\begin{equation}\label{eq:final}
\begin{aligned}
\max_{a,\,\mu,\,D,\,z} \quad & \log\det D \\
\text{s.t.}\quad
&  \eqref{eq:exclude-mip}, \eqref{eq:containment}
\end{aligned}
\end{equation}
The resulting polytope $\mathcal{P}=q+\mathcal{ \tilde{P}}$ contains the point $q$ by construction and excludes all points in the obstacle point set~$X$ from its interior. 
This problem is highly nonconvex due to the joint optimization over the polytope parameters $\{a_j\}$ and the binary variables $\{z_{ij}\}$.
Deits and Tedrake~\cite{Deits14computinglarge} decouple a similar problem via a two-stage iterative framework that alternates between greedily constructing separating hyperplanes and computing a maximum-volume inscribed ellipsoid (MVIE) for a fixed polytope.

There are several possible improvements at each stage of this algorithm.
MVIE is a convex optimization problem and formulated as a semidefinite program (SDP) with conic quadratic constraints  in~\cite{Deits14computinglarge}.
Further improvements reformulate the problem as a second-order cone program (SOCP) for computational efficiency~\cite{10970076}.
For the construction of separating hyperplanes, prior approaches employ a greedy strategy that repeatedly identifies the obstacle point closest to the current ellipsoid, computes the ellipsoid-induced normals, and a supporting hyperplane tangent to a uniformly inflated version of the ellipsoid at that point.
This hyperplane is then added to the polytope to exclude the point outside the free region while preserving the feasibility of the current ellipsoid.
However, this greedy procedure can be conservative and expensive, as it requires evaluating distances to all remaining obstacle points at each iteration. 
The computational cost becomes significant for large-scale point clouds. 
We propose a more efficient and lightweight approach for generating separating hyperplanes that reduces per-iteration complexity while maintaining practical effectiveness.

\section{Approach}\label{sec:approach}

In the iterative optimization framework, the polytope update step leverages the current MVIE as a heuristic to select separating planes that preserve collision-free and enable further ellipsoid expansion.
In practice, however, most obstacle points are redundant and do not become active constraints at the solution, and therefore do not influence either the ellipsoid or the polytope.

\begin{figure*}[!th]
      \centering
    \includegraphics[width=1\columnwidth]{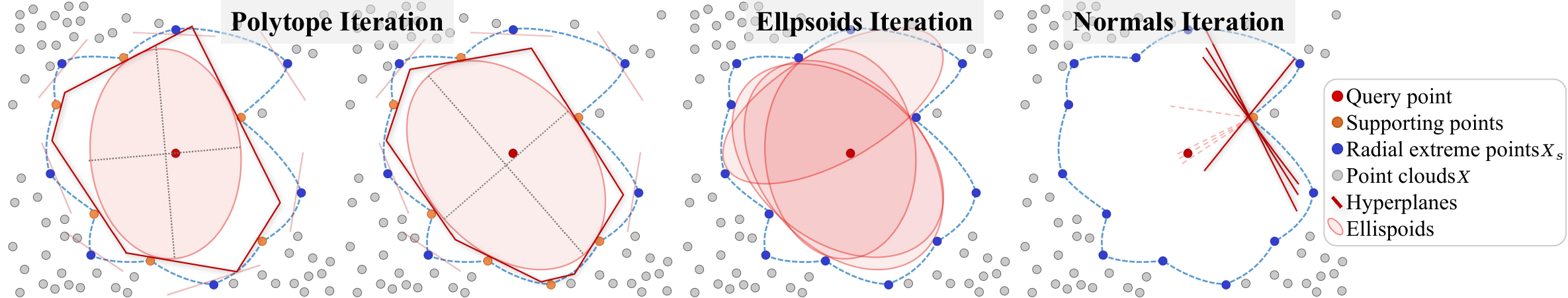}
    \vspace{-0.6cm}
      \caption{
      Iterative refinement of the polytope, ellipsoid, and associated normals. A subset of obstacle points serves as supporting points that determine the active hyperplanes of the polytope, and this subset is contained in the starshaped set.
      }
      \label{fig: p3}
      \vspace{-0.1cm}
\end{figure*}

As shown in Fig.~\ref{fig: p3}, the majority of obstacle points remain inactive throughout the polytope, ellipsoid, and normal update steps. 
The objective of the iteration is the identification of effective separating normals for a small set of supporting points, rather than the construction of an accurate maximum-volume ellipsoid within the collision-free space.
The ellipsoid is not required to be strictly collision-free nor to contain the query seed at every iteration.
This iterative procedure can be interpreted as an optimization over the normals associated with this subset of points using the ellipsoid as a heuristic.
In addition, the intermediate polytopes generated during the iteration only require a sufficiently tight local approximation to support ellipsoid growth, while an exact collision-free coverage at every iteration is not required.

This observation motivates reducing the full obstacle point set to a smaller subset that is the most relevant to optimizing the supporting points' normals. 
The starshaped set construction naturally identifies obstacle points lying on the visible inner boundary of the free space towards the ellipsoid center.
These points are precisely those that can become active supporting constraints, making them a suitable choice for filtering the obstacle point set.

The proposed algorithm is summarized in Alg.~\ref{alg: 1}.
We consider scenarios in which the robot seed is either a point or a line segment, which allows the initialization of a collision-free ellipsoid.
When the seed is a point, we approximate it using a sufficiently short line segment aligned with the major axis of the ellipsoid.

\begin{algorithm}[!ht]
    \LinesNumbered
    \caption{Iterative Inflation with Starshaped Filtering}
    \label{alg: 1}
    \KwIn{query point $q \in \mathcal{F}$, obstacle point set $X \subset \mathbb{R}^n$, ellipsoid parameters $D_0$}
    \KwOut{convex polytope $\mathcal{P}$}
    $\mathcal{E}_0 \leftarrow \textbf{InitEllipsoid}(D_0, q)$  \\
    $X_{s}  \leftarrow $\textbf{StarshapedFiltering}($X, q$)  \\
    $k \leftarrow 1$ \\
    \Repeat{end condition $\lor$ $k \ge k_{\max}$}{
        $\mathcal{P}_k \leftarrow \textbf{UpdateNormals}(\mathcal{E}_{k-1}, X_{s})$\\
        $\mathcal{E}_{k}(D_k, \mu_k) \leftarrow \textbf{MVIE}(\mathcal{P}_k)$ \\
        $k \leftarrow k + 1$
    }
    \If{certify}
    { $\mathcal{P}_k \leftarrow\textbf{GreedySeparatingPlanes}(\mathcal{E}_{k-1}, X)$}
    \Return $\mathcal{P}_k$
\end{algorithm}

Given a query point, we construct a starshaped set and extract its radial extreme points (Line 2).
These points approximate the active supporting point set during polytope refinement.
During polytope generation (Line 5), we do not employ a greedy hyperplane selection strategy.
Instead, when the number of points $X_s$ is moderate, we update the normals of the separating hyperplanes for all supporting points and directly assemble the polytope in H-representation.
This procedure may introduce redundant constraints, however, redundancy removal is only performed when the number of constraints becomes large.
In such cases, we convert the polytope to a vertex V-representation to identify and eliminate redundant hyperplanes.
The MVIE update and end condition follow \cite{Deits14computinglarge}, with a maximum iteration limit $k_{\max}$ for the iterative loop (Line 8).
If a strictly collision-free polytope is required (Line 9-10), we perform a final iteration of greedy separating hyperplane generation to obtain collision-free guarantees.
This framework is particularly effective for environments with large-scale obstacle point clouds, as it avoids repeated global collision checks and expensive greedy updates.
The remainder of this section presents the vertex generation via the starshaped construction and the resulting interior polytope construction.

\subsection{Supporting Point Approximation via Starshaped Construction}

The filtered set is mainly influenced by the choice of the flipping radius $R$, which controls the extent of radial reweighting in the starshaped construction, and the inclusion of boundary points, which affect the completeness of the starshaped approximation near map limits.
In this section, we analyze the geometric and robustness properties of this construction and its variants, including the flipping radius $R$, the treatment of the workspace boundary, the conditions under noisy point clouds, and the ellipsoid-normalized flipping map.

\textbf{Flipping radius}.
For any two points $x_i,x_j\in X$, let $l_i=\|x_i - q \|$ and $l_j=\|x_j - q\|$. 
The radial ratio induced by the flipping satisfies
\begin{equation}
\beta =\frac{R-l_i}{R-l_j}
\leq 1+ \frac{\sup_{x\in X} \|x - q\| - \inf_{x\in X} \|x - q\|}{R}.
\end{equation}
As $R$ increases, relative radial variation among flipped points diminishes and the transformed points concentrate on the sphere with radius $R$ denoted $\mathbb{S}^{n-1}_R$. 
As the limit $R\to\infty$, the flipping map becomes nearly radial-invariant, and the induced starshaped construction approaches the identity mapping. 
The flipping radius~$R$ controls a trade-off between aggressive filtering and geometric approximation.

\begin{proposition}[Monotonicity in $R$]\label{prop:monotone-R}
Let $R_1, R_2 \in \mathbb{R}_{>0}$ satisfy
$R_2\ge R_1 \ge \sup_{x_i\in X}\|x_i\|$, Then $\mathcal{S}_{R_1} \subseteq \mathcal{S}_{R_2}$, where $\mathcal{S}_R$ is defined in Proposition~\ref{prop:flip-star}.
\end{proposition}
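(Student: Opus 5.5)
We set $q=\mathbf{0}$ and reduce the set inclusion to a ray-by-ray comparison, using the radial description from the proof of Proposition~\ref{prop:flip-star}. For each $v\in\mathbb{S}^{n-1}$ that proof gives $\mathcal{S}_R\cap\ell_v=\{rv\mid 0\le r\le 2R-\rho_R(v)\}$, where $\rho_R(v)=\max\{t\in[0,2R]\mid tv\in\mathcal{C}_R\}$. Hence $\mathcal{S}_{R_1}\subseteq\mathcal{S}_{R_2}$ holds if and only if, for every direction $v$,
\[
\rho_{R_2}(v)-\rho_{R_1}(v)\;\le\;2(R_2-R_1)=:2\delta .
\]
Equivalently: whenever $tv\in\mathcal{C}_{R_2}$, the point $\max(t-2\delta,0)\,v$ lies in $\mathcal{C}_{R_1}$. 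If $t\le 2\delta$ there is nothing to prove, since $\mathbf{0}\in\mathcal{C}_{R_1}$. So we may assume $t>2\delta$.

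\textbf{Generators.} Write $x_i=l_iu_i$ with $u_i\in\mathbb{S}^{n-1}$. Then $f_{R_1}(x_i)=r_iu_i$ with $r_i=2R_1-l_i\in[R_1,2R_1]$, and $f_{R_2}(x_i)=(r_i+2\delta)u_i$. So increasing $R$ pushes every flipped generator radially outward by exactly $2\delta$ and leaves the generator $\mathbf{0}$ fixed. A first consequence, obtained by summing convex combinations, is $\mathcal{C}_{R_2}\subseteq\mathcal{C}_{R_1}+2\delta\bar B(\mathbf{0},1)$. This Minkowski bound does not control radial functions by itself, because the nearby point of $\mathcal{C}_{R_1}$ need not lie on $\ell_v$. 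We therefore argue directly with convex weights that stay on the ray.

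\textbf{Rescaling the weights.} Take $tv=\sum_i\lambda_i(r_i+2\delta)u_i$ with $\lambda_i\ge0$ and $A:=\sum_i\lambda_i\le1$, the remaining weight going to $\mathbf{0}$. Set $\lambda_i'=\alpha\lambda_i(r_i+2\delta)/r_i$, where
\[
\alpha=\min\bigl\{1,\;1/(A+2\delta B)\bigr\},\qquad B:=\sum_i\lambda_i/r_i .
\]
Then $\sum_i\lambda_i'\le 1$, and $\sum_i\lambda_i' r_iu_i=\alpha t v$. This point is a convex combination of $f_{R_1}$-images together with $\mathbf{0}$, so $\alpha t\,v\in\mathcal{C}_{R_1}$, and the problem reduces to the scalar inequality $\alpha t\ge t-2\delta$. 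When $A+2\delta B\le1$ this is immediate. Otherwise it is implied by $t\,(A+2\delta B-1)\le 2\delta(A+2\delta B)$.

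\textbf{Main obstacle.} Since $A\le1$, this last inequality is implied by $tB\le A+2\delta B$, i.e.\ $t\le A/B+2\delta$, and this is the step I expect to be hardest. The quantity $A/B$ is $A$ times a weighted harmonic mean of the $r_i$, while $t\le\sum_i\lambda_i r_i+2\delta A$ is controlled by an arithmetic mean. The two compare in the wrong direction unless the specific geometry is used: all $r_i\in[R_1,2R_1]$, and $t$ is the norm of a vector sum, so cancellation between the directions $u_i$ helps. If this particular rescaling is too lossy, the fallback is to vary the weights $\lambda_i'$ by a one-parameter interpolation between $\lambda_i$ and $\lambda_i(r_i+2\delta)/r_i$, and choose the parameter so that the resulting point stays on $\ell_v$ and reaches radius at least $t-2\delta$.

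An alternative route to the same ray inequality is to represent $\rho_R(v)$ through the supporting functional of the facet of $\mathcal{C}_R$ hit by $\ell_v$. Applying a facet of $\mathcal{C}_{R_2}$ to the shifted generators changes its value by at most $2\delta$, which would give $\rho_{R_2}(v)\le\rho_{R_1}(v)+2\delta$ once the normalisation of that facet is handled.
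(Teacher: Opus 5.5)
\textbf{The proposition is false, so the step you flag cannot be closed.} Your reduction to the ray inequality $\rho_{R_2}(v)-\rho_{R_1}(v)\le 2(R_2-R_1)$ is correct. But the step you call the main obstacle is not merely hard: that inequality fails in general.

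The mechanism is the harmonic-versus-arithmetic mismatch you noticed. Take two generators $au_1,bu_2$ at angle $\phi\in(0,\pi)$. The bisecting ray meets $\operatorname{conv}\{\mathbf{0},au_1,bu_2\}$ at distance $\cos\frac{\phi}{2}\cdot\frac{2ab}{a+b}$. A common radial shift $s$ increases $\frac{2(a+s)(b+s)}{a+b+2s}$ at rate $\frac{2((a+s)^2+(b+s)^2)}{(a+b+2s)^2}$, which exceeds $1$ whenever $a\neq b$. The only ``cancellation between directions'' is the factor $\cos\frac{\phi}{2}$, and it cannot compensate when $\phi$ is small.

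Concretely, let $q=\mathbf{0}$, $u_1=(1,0)$, $u_2=(\cos\frac{\pi}{6},\sin\frac{\pi}{6})$, $X=\{u_1,\frac{1}{10}u_2\}$, $R_1=1$, $R_2=\frac{3}{2}$. Then $\mathcal{C}_{R_1}=\operatorname{conv}\{\mathbf{0},u_1,1.9u_2\}$ and $\mathcal{C}_{R_2}=\operatorname{conv}\{\mathbf{0},2u_1,2.9u_2\}$. For $v=(\cos\frac{\pi}{12},\sin\frac{\pi}{12})$,
\[
\rho_{R_1}(v)=\tfrac{3.8}{2.9}\cos\tfrac{\pi}{12}\approx 1.2657,\qquad
\rho_{R_2}(v)=\tfrac{11.6}{4.9}\cos\tfrac{\pi}{12}\approx 2.2867,
\]
so $\rho_{R_2}(v)-\rho_{R_1}(v)\approx 1.021>1=2(R_2-R_1)$. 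Hence $\mathcal{S}_{R_1}\cap\ell_v$ reaches radius $\approx 0.7343$, while $\mathcal{S}_{R_2}\cap\ell_v$ stops at $\approx 0.7133$. Because $\rho_R$ is continuous on the open cone spanned by $u_1,u_2$, taking closures changes nothing, and $0.72\,v\in\mathcal{S}_{R_1}\setminus\mathcal{S}_{R_2}$. In your notation this point has $A=1$ and $t>A/B+2\delta$. No reweighting, interpolation, or facet normalisation can rescue the argument, since the target inequality itself fails.

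\textbf{The paper's proof has the same hole.} It correctly obtains $\mathcal{C}_{R_1}\subseteq\mathcal{C}_{R_2}$ (indeed $f_{R_1}(x_i)\in[\mathbf{0},f_{R_2}(x_i)]$). It then claims $f_{R_1}^{-1}(\mathcal{C}_{R_1})\supseteq f_{R_2}^{-1}(\mathcal{C}_{R_2})$ by ``applying inverse maps''. These are preimages under two different maps, so the hull inclusion yields nothing of the kind. On each ray, within $\bar{B}(\mathbf{0},2R_1)$, the claimed inclusion amounts exactly to your ray inequality, which the example violates.

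Your remark that $\mathcal{C}_{R_2}\subseteq\mathcal{C}_{R_1}+2\delta\bar{B}(\mathbf{0},1)$ does not control radial functions was the right warning sign. A correct statement needs an extra hypothesis. For instance, if all $\|x_i\|=l$, then $\mathcal{C}_{R_2}=\frac{2R_2-l}{2R_1-l}\,\mathcal{C}_{R_1}$, and your ray inequality follows from $\rho_{R_1}\le 2R_1-l$.
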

\begin{proof}
For any $x_i\in X\setminus \{q\}$, the sphere flipping maps satisfy
\[
f_{R_2}(x_i)=f_{R_1}(x_i)+\frac{2(R_2-R_1)}{\|x_i-q\|}(x_i-q),
\]
thus $f_{R_1}(X)\subseteq \operatorname{cone}(f_{R_2}(X))$, where $\operatorname{cone}(f_{R_2}(X))\coloneqq\{\theta (x_i-q) \mid x_i\in X,\theta\geq 0\}$. Since $R_1\leq R_2$,
taking convex hulls yields $\mathcal{C}_{R_1}\subseteq \mathcal{C}_{R_2}$, applying inverse maps give $f_{R_1}^{-1}(\mathcal{C}_{R_1})\supseteq f_{R_2}^{-1}(\mathcal{C}_{R_2})$, and therefore $\mathcal{S}_{R_1}\subseteq \mathcal{S}_{R_2}$.
\end{proof}

In practice, the flipping radius $R$ is chosen adaptively based on the distribution of obstacle points. 
When multiple starshaped constructions are applied in a localized region, parameter~$R$ can be increased progressively to balance exploration and refinement.
Fig.~\ref{fig: method} (a) shows a numerical example.

\textbf{Regional boundary}.
In starshaped constructions, the set $\mathcal{S}_R$ may under-approximate the collision-free region if $q\notin\operatorname{conv}\big(f_R(X)\big)$, and the projected starshaped set can be a small projected set that does not contain the query point.
In~\cite{liu2022star}, additional points are added around the map boundary to alleviate this issue. 
We define the boundary set as ${\partial \mathcal{W}}\coloneqq\{y\in \mathcal{W}\mid [q,y]\subset \mathcal{F}\}$, now consider the augmented region
\begin{equation}
\mathcal{C}^{+}_R\coloneqq\operatorname{conv}\big(f_R(X\cup\{q\}\cup {\partial \mathcal{W}})\big),
\qquad
\mathcal{S}^{+}_R\coloneqq \overline{\bar{B}(q,2R)\setminus f^{-1}_R \big(\mathcal{C}_R^+ \big)}.
\end{equation}
The augmented set $\mathcal{S}_R^{+}$ remains starshaped with respect to the query point and typically provides a larger and more robust inner approximation of the free space, as can be seen in Fig.~\ref{fig: method}(b).

In the idealized continuous setting, a perfect boundary is invariant under the flipping map and the convex-hull operation.
As checking the containment of the actual boundary is usually more expensive than simply adding discrete points to the obstacle point set, the selection of the augmented points reflects the resolution of the discretization.
For a discrete approximation $\widehat{\partial \mathcal{W}} \subset \partial \mathcal{W}$, invariance holds approximately. 
The quality of the approximation increases as the sampling density increases.
Assuming the boundary is known with a continuous realization, then the procedure described in Fig.~\ref{fig: p2} preserves the boundary set when it is augmented to the original point cloud. 
A discretization procedure can be done on the boundary set to generate points that augment the point cloud $X$. 
A finer discretization produces a better approximation of the free space $\mathcal{F}$. However, it also increases computational cost and remains conservative, still potentially excluding portions of free space.

Therefore, we do not explicitly maintain the full starshaped set, but keep its radial extreme points as interior approximated obstacle points for polytope generation. 
This relaxation avoids sensitivity to boundary discretization while preserving the geometric information for ellipsoid and normals updates.

\begin{figure*}[!t]
      \centering
    \includegraphics[width=1\columnwidth]{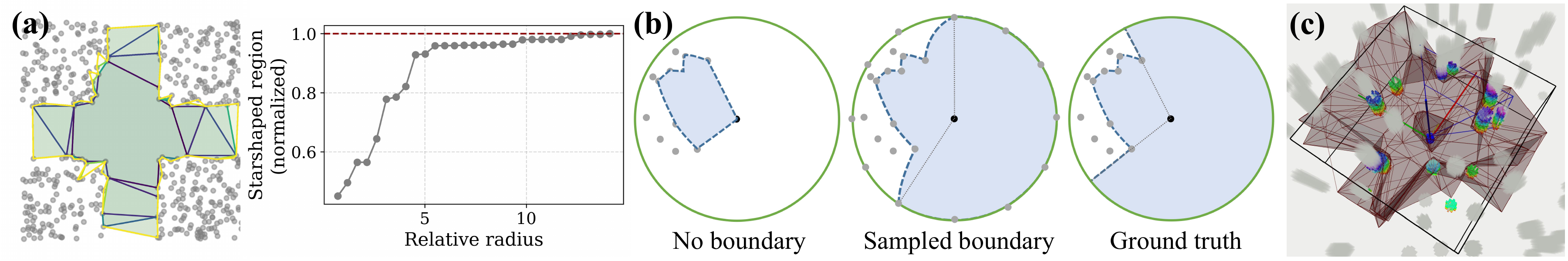}
    \vspace{-0.5cm}
     \caption{Starshaped construction. (a) Monotonic expansion of a 2D starshaped region as $R$ increases; darker to lighter (yellow) indicates increasing $R$. (b) Increased boundary sampling reduces region loss (region in blue). (c) 3D starshaped meshes.}
      \label{fig: method}
\end{figure*}

\textbf{Starshaped filtering under noise}.
Fix the query point $q\in\mathbb{R}^n$. For any point $x_i\in X$ with $x_i\neq q$, define
\[
r_i \coloneqq \|x_i-q\|\in(0,R],\qquad 
\theta_i \coloneqq \frac{x_i-q}{\|x_i-q\|}\in\mathbb{S}^{n-1},
\]
such that $x_i = q + r_i \theta_i$.
Now assume the measurement noise on $x_i$ is $\hat x_i = x_i + w_i, w_i \overset{\text{i.i.d.}}{\sim}\mathcal{N}_n(\mathbf{0},\Sigma)$.
Define $\hat x_i$ along direction $\theta_i$ by~$\hat r_i \coloneqq \theta_i^\top(\hat x_i-q)$. Then $\hat r_i = r_i + \theta_i^\top w_i$, where
\[
\theta_i^\top w_i \sim \mathcal{N}(0,\sigma_{\theta_i}^2),\qquad 
\sigma_{\theta_i}^2 \coloneqq \theta_i^\top \Sigma \theta_i.
\]
Now fix a direction $\theta\in\mathbb{S}^{n-1}$. Let $\bar r(\theta)$ be the true first-hit range and assume all other measurements along this direction satisfy $r_k(\theta)\ge \bar r(\theta)+\delta(\theta)$ with some gap $\delta(\theta)>0$. 
Let $\{\hat r_j\}_{j=1}^{N_1}$ be the measured ranges of first-hit returns and
$\{\hat r_k\}_{k=1}^{N_2}$ be the measured ranges of non-first-hit returns along this $\theta$, i.e.,
\[
\hat r_j = \bar r(\theta)+\varepsilon_j,\qquad
\hat r_k = r_k(\theta)+\eta_k,
\]
where $\varepsilon_j,\eta_k \overset{\text{i.i.d.}}{\sim}\mathcal{N}(0,\sigma_\theta^2)$ with $\sigma_\theta^2\coloneqq \theta^\top\Sigma\theta$.
Let $\hat x(\theta)$ be the selected measured point with the minimum measured range. Proposition~\ref{prop:prob} provides an upper bound on the probability that, due to Gaussian measurement noise, a non-first-hit measurement is incorrectly selected.

\begin{proposition}[Non-first-hit probability bound]\label{prop:prob}
Let $\hat{\mathcal{X}}_1 \coloneqq \{\hat x_j\}_{j=1}^{N_1}$ be the set of measured first-hit points along direction $\theta$. Then
\[
\mathbb{P}\!\left(\hat x(\theta)\notin\hat{\mathcal{X}}_1\right)
\le
N_1N_2\,\Phi\!\left(-\frac{\delta(\theta)}{\sqrt{2}\sigma_\theta}\right)
\le
\frac{N_1N_2\sigma_\theta}{\sqrt{\pi}\,\delta(\theta)}
\exp\!\left(-\frac{\delta^2(\theta)}{4\sigma_\theta^2}\right),
\]
where $\Phi(\cdot)$ is the cumulative distribution function (CDF) of a standard normal distribution.
\end{proposition}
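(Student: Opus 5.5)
The plan is to reduce the event to a union of pairwise comparisons and then apply a Gaussian tail bound.

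\textbf{Reduction to pairs.} The selected point $\hat x(\theta)$ has the minimum measured range along $\theta$. If it is not a first-hit return, then some non-first-hit range $\hat r_k$ is at most the minimum over first-hit ranges. In particular, $\hat r_k \le \hat r_j$ for every $j$, and hence for at least one $j$. So the event is contained in
\[
\bigcup_{j=1}^{N_1}\bigcup_{k=1}^{N_2}\{\hat r_k\le \hat r_j\}.
\]
Note that only one $j$ is needed for the containment, so the union over $j$ is wasteful but valid. It is exactly what produces the factor $N_1N_2$ in the statement. By the union bound, $\mathbb{P}(\hat x(\theta)\notin\hat{\mathcal X}_1)\le \sum_{j,k}\mathbb{P}(\hat r_k\le\hat r_j)$.

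\textbf{Bounding each pair.} Each term satisfies
\[
\hat r_k-\hat r_j=(r_k(\theta)-\bar r(\theta))+(\eta_k-\varepsilon_j).
\]
Since $\eta_k$ and $\varepsilon_j$ are independent $\mathcal N(0,\sigma_\theta^2)$, their difference $\eta_k-\varepsilon_j$ is $\mathcal N(0,2\sigma_\theta^2)$. The gap assumption gives $r_k(\theta)-\bar r(\theta)\ge\delta(\theta)$. Therefore
\[
\mathbb{P}(\hat r_k\le\hat r_j)\le \mathbb{P}(\eta_k-\varepsilon_j\le-\delta(\theta))=\Phi\!\left(-\frac{\delta(\theta)}{\sqrt2\,\sigma_\theta}\right),
\]
where the inequality uses monotonicity of the CDF. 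Summing over the $N_1N_2$ pairs gives the first inequality.

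\textbf{Mills-ratio step.} The second inequality is the standard tail bound $\Phi(-t)\le \frac{1}{t\sqrt{2\pi}}e^{-t^2/2}$ for $t>0$. It follows from
\[
\int_t^\infty \phi(s)\,ds\le\int_t^\infty \frac{s}{t}\,\phi(s)\,ds=\frac{\phi(t)}{t}.
\]
Substituting $t=\delta(\theta)/(\sqrt2\sigma_\theta)$ gives
\[
\frac{\sqrt2\,\sigma_\theta}{\delta(\theta)\sqrt{2\pi}}\,e^{-\delta^2(\theta)/(4\sigma_\theta^2)}=\frac{\sigma_\theta}{\sqrt\pi\,\delta(\theta)}\,e^{-\delta^2(\theta)/(4\sigma_\theta^2)},
\]
which matches the statement.

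\textbf{Expected obstacle.} The only delicate point is the modeling. The pairwise bound needs each pair $(\eta_k,\varepsilon_j)$ to be independent, which the i.i.d.\ assumption supplies. It also needs the radial noise along $\theta$ to have variance $\sigma_\theta^2$, which is taken as given in the setup preceding the proposition. Once these are stated, the rest is routine.
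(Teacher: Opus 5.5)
Your proposal is correct and matches the paper's proof essentially step for step: the reduction to pairwise events $\hat r_k\le\hat r_j$ with a union bound over the $N_1N_2$ pairs, the observation that $\eta_k-\varepsilon_j\sim\mathcal N(0,2\sigma_\theta^2)$ together with the gap $\delta(\theta)$, and the same Mills-ratio tail bound, which you additionally derive. Your side remark is also valid and goes beyond the paper: fixing a single first-hit index $j$ already suffices, so the bound tightens by a factor $N_1$ to $N_2\,\Phi\!\left(-\delta(\theta)/(\sqrt2\,\sigma_\theta)\right)$.
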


\begin{proof}
The condition $\hat x(\theta)\notin\hat{\mathcal{X}}_1$ requires $\hat r_k<\hat r_j$ for some pair $(j,k)$.
Since $r_k(\theta)\ge \bar r(\theta)+\delta(\theta)$, $\hat r_k<\hat r_j \Rightarrow \eta_k-\varepsilon_j<-\delta(\theta)$.
Because $\eta_k-\varepsilon_j\sim \mathcal{N}(0,2\sigma_\theta^2)$,
\[
\mathbb{P}(\hat r_k<\hat r_j)
\le \Phi\!\left(-\frac{\delta(\theta)}{\sqrt{2}\sigma_\theta}\right).
\]
A union bound over $N_1N_2$ pairs yields the first inequality. The second inequality follows from~$\Phi(-t)\le \frac{1}{\sqrt{2\pi}t}\exp(-t^2/2)$ for $t>0$.
\end{proof}

\textbf{Biased line-segment containment}.
Let $\mathcal E_k = \{\mu_k + D_k \tilde{y} \mid \|\tilde{y}\| \le 1 \}$
be an ellipsoid (e.g., the current MVIE of the polytope) that contains the line segment,
and define the ellipsoid-normalized coordinate $u(x):= D_k^{-1}(x-\mu_k)$,
where $\mathcal E_k$ is mapped to the unit ball.
The distance of a point $x$ in this normalized coordinate is $\|u(x)\|$.
We define an \emph{ellipsoid-normalized flipping map} parameterized by $R>1$:
\begin{equation}\label{eq:ellip-flip}
g_R(x)
\coloneqq
\mu_k
+
D_k\left( \big(2R-\|u(x)\|\big)\frac{u(x)}{\|u(x)\|} \right),
\qquad x\neq \mu_k.
\end{equation}
This map reflects points in ellipsoid-normalized space across the level set $L_R(u)\coloneqq\{x\in\mathbb{R}^n\mid u(x)=R\}$ and maps them to the original space. 

An example of a 3D starshaped polytope is shown in Fig.~\ref{fig: method}(c).
When the query point or environment changes incrementally, the starshaped representation can be updated locally. 
We can test whether the new query point lies within an approximation of the star kernel, computed as the intersection of halfspaces induced by the original supporting planes. 
If satisfied, the starshaped property is preserved, and only local refinements are performed to incorporate newly observed obstacles; otherwise, the starshaped set is reconstructed from the new query position.

\subsection{Interior Polytope Generation}

With a starshaped approximation of the local collision-free region, we derive a convex polytope that provides an inner approximation of the region.
In~\cite{zhong2020generating}, the starshaped set is approximated by a nonconvex polytope and then convexified by adding separating hyperplanes that exclude interior obstacle points.
This is overly conservative and loses a significant amount of free-space volume.

Given the starshaped extreme point set $X_s$ as shown in Fig.~\ref{fig: p2}, we construct the interior polytope
by intersecting supporting halfspaces induced by each $x_i \in X_s$.
We define the outward normal in the ellipsoid normalized coordinate as
$a_i= u(x_i)/\|u(x_i)\|$ and the associated halfspace
$\mathcal{H}_i= \{ y \in \mathbb{R}^n \mid a_i^\top (y - x_i) \le 0 \}$,
which excludes $x_i$ while preserving containment of the current ellipsoid.
The resulting polytope $\mathcal{P}_k= \bigcap_{x_i \in X_s} \mathcal{H}_i$ provides a valid
inner approximation of the collision-free space.
Unlike greedy IRIS-style methods that add a single separating constraint per
iteration, all candidate supporting halfspaces induced by $X_s$ are incorporated in the subsequent MVIE update.

\vspace{-0.2cm}
\section{Results}

\subsection{Implementation Details}

\begin{wrapfigure}{r}{0.41\columnwidth}
    \vspace{1em}
    \centering
    \vspace{-0.5cm}
\includegraphics[width=\linewidth]{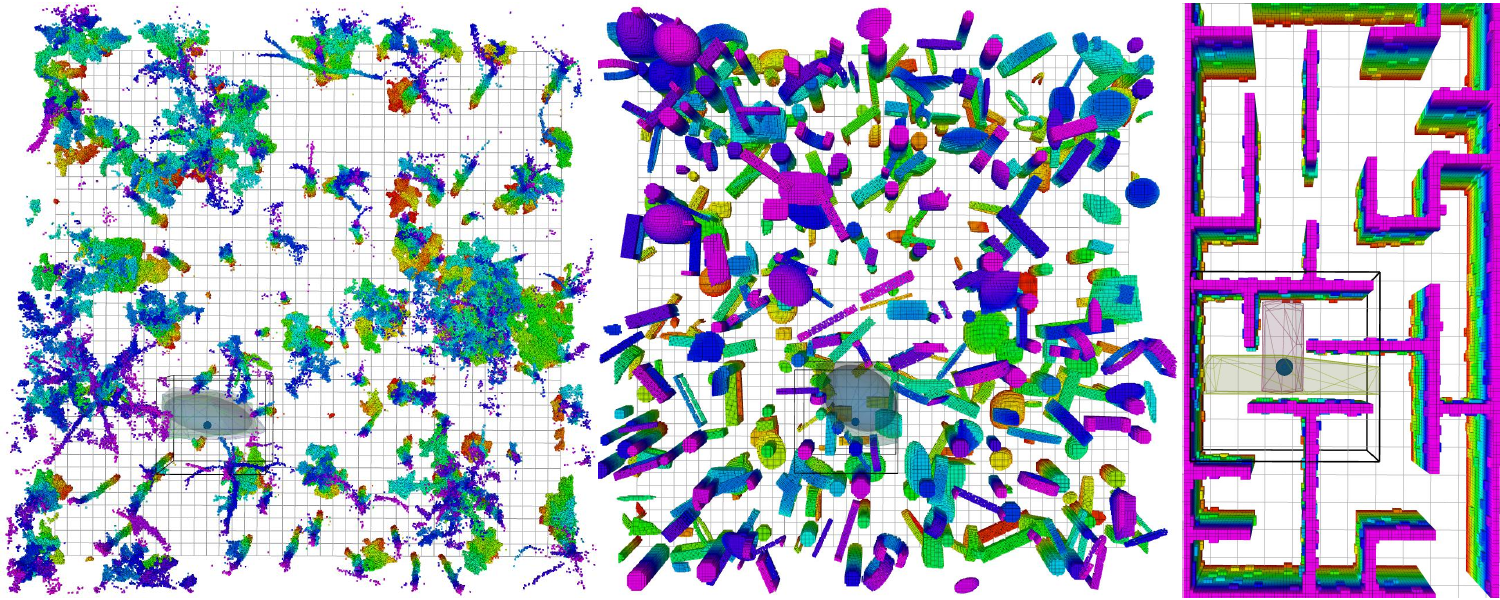}
\vspace{-0.9cm}
\caption{Evaluation maps (left to right): real, obstacle, maze.}
\label{fig:pc}
\vspace{-0.3cm}
\end{wrapfigure}

We evaluate the proposed framework using the benchmark pipeline in~\cite{10610207} using diverse environments. 
As shown in Fig.~\ref{fig:pc}, we consider maze environments (denoted as maze), randomly generated obstacle fields (denoted as obstacle), and real-world point clouds (denoted as real) derived from the M3ED dataset~\cite{Chaney_2023_CVPR}.
For the simulated environment, we consider two map representations. 
Raw uses raw point clouds augmented with zero-mean Gaussian noise to model sensing uncertainty. 
Surface Grid uses a voxelized surface representation of inflated obstacles and does not include additional noise. 
For the real-world dataset, we only use raw point clouds.
We set the obstacle and real map as size of $40 \ \textrm{m}  \times 40 \ \textrm{m}  \times 10 \ \textrm{m} $, and the maze as $20 \ \textrm{m}  \times 10 \ \textrm{m}  \times 5 \ \textrm{m} $.
In each test, a local map is cropped with a range of 3m.
For starshaped construction, we choose $R = \gamma \cdot \max_{x_i\in X} \|u(x_i)\|,  \gamma \in [1,\,5]$ to ensure sufficient radial separation and augment $X$ with the vertices of the local map's bounding box.

For benchmarking, we compare our method with FIRI~\cite{10970076}, RILS~\cite{7839930}, and Galaxy~\cite{zhong2020generating}.
For FIRI, we adopt the lite implementation (denoted as FIRI-lite)
to address the seed containment issue in IRIS and improve the MVIE computation.
FIRI is significantly faster than IRIS and achieves computation time comparable to RILS and Galaxy.
In this lite version, the seed is represented as a point or a line segment, and the restrictive half-space computation for polytope containment is neglected.
We employ Quickhull~\cite{10.1145/235815.235821} for convex hull construction in both Galaxy and our method.
Galaxy directly generates convex hulls from starshaped polytopes and meshes.
To reduce collision violations, Galaxy typically requires a higher resolution in the convex hull construction.
However, even with increased resolution, strict collision-free guarantees cannot be ensured.
The dominant runtime in Galaxy arises from repeated convex hull construction, and the performance of Quickhull degrades as the effective sampling radius increases, since a larger radius introduces more obstacle points.
Galaxy needs an inherent trade-off between sampling resolution, computational cost, and practical collision avoidance.
In contrast, our method does not require the starshaped region to be collision-free during its construction.
This allows the proposed method to apply a low-precision convex hull with significantly reduced computation cost while maintaining effectiveness in iterative optimization.

\subsection{Performance Analysis}

We evaluate computational performance and different initializations across diverse environments, with 200 runs per setting.

\textbf{Computation time}.
We evaluate runtime performance using randomly generated obstacle maps with varying parameterizations. 
Three obstacle density levels are considered using raw point clouds.
The computation time of each component is shown in Table~\ref{tab:time}.
\begin{table}[!ht]
\scriptsize
\renewcommand\arraystretch{1.1}
\setlength{\tabcolsep}{2pt}
\centering
\caption{Computation time (ms) for three density levels of obstacle map.}
\label{tab:time}
\begin{tabular}{lccc}
\toprule
\textbf{Component} 
& \textbf{Dense} 
& \textbf{Medium} 
& \textbf{Sparse} \\
\midrule
\#Local points
& 36771 $\pm$ 11192
& 13156 $\pm$ 9437
& 4564 $\pm$ 7216 \\
\midrule
Starshaped construction       
& 0.979 $\pm$ 0.362
& 0.338 $\pm$ 0.242
& 0.210 $\pm$ 0.200 \\
Iteration in total        
& 1.054 $\pm$ 0.369
& 0.698 $\pm$ 0.257
& 0.722 $\pm$ 0.478 \\
\quad Normal updates (per-iter)
& 0.0004 $\pm$ 0.0002
& 0.0003 $\pm$ 0.0001
& 0.0006 $\pm$ 0.0003 \\
\quad MVIE (per-iter)       
& 0.168 $\pm$ 0.029
& 0.138 $\pm$ 0.023
& 0.225 $\pm$ 0.135 \\
Greedy separating planes        
& 1.366 $\pm$ 0.583
& 0.373 $\pm$ 0.341
& 0.165 $\pm$ 0.214 \\
All                            
& 3.398 $\pm$ 1.062
& 1.410 $\pm$ 0.725
& 1.097 $\pm$ 0.744\\
\bottomrule
\end{tabular}
\end{table}

The cost of greedy computing separating planes over the full obstacle point set increases significantly with the number of obstacle points. 
In dense environments, the greedy separating planes step becomes the dominant part of the total runtime.
The proposed starshaped construction mitigates this cost by reducing the effective point set used in iterations. 
After the starshaped region is constructed, the point subsets involved in normal updates (typically fewer than 40 points) are smaller and the computation time is negligible across all density levels.
Although the convex hull computation for starshaped construction introduces some cost, it is comparable to a single greedy separating plane computation and is allocated over the iterative process.

\textbf{Ellipsoid initialization}. 
If the seed is a query point, previous works typically initialize the ellipsoid as a small sphere centered at the seed, with a radius comparable to the map resolution.
When the seed is a line segment, the initialization is less obvious and has not been explicitly discussed in prior work. 
In this case, different choices can affect the behavior of the optimization. 
Initializing the ellipsoid with its major axis aligned to the line segment may overly constrain the feasible region at early stages, leading to conservative behavior, such as in RILS. 
To analyze this effect, we consider three initialization modes: a small spherical initialization to point seeds (denoted as small), a large spherical initialization whose radius is half the segment length (denoted as large), and a thin segment-aligned initialization that produces an elongated ellipsoid along the line direction (denoted as thin). 
Here, we use RIFI-lite as a reference and test in maze and obstacle maps with raw point clouds. 
\begin{figure*}[!th]
      \centering
       \vspace{-0.1cm}
    \includegraphics[width=1\columnwidth]{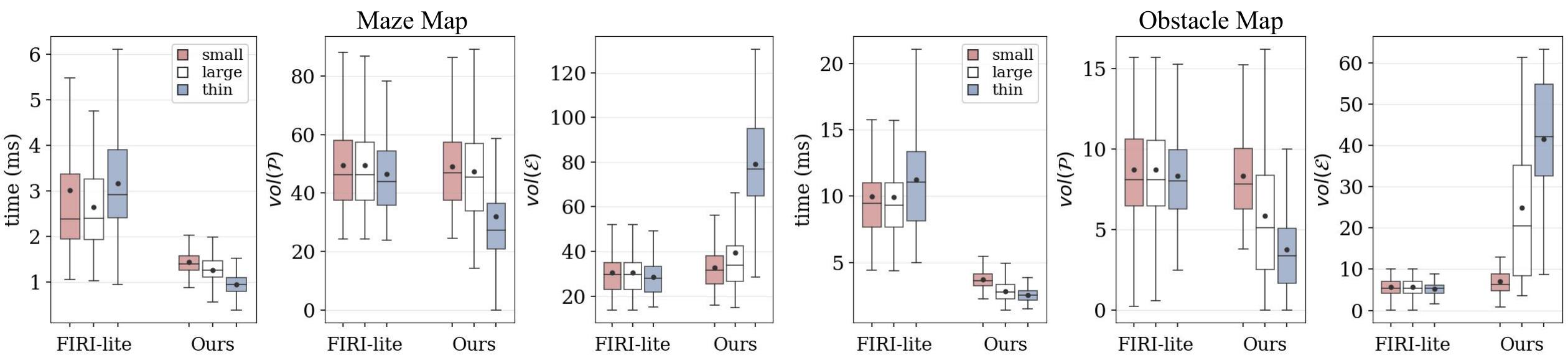}
    \vspace{-0.6cm}
      \caption{Comparison of performance by different ellipsoid initialization. }
      \label{fig:bench1}
      \vspace{-0.1cm}
\end{figure*}
As shown in Fig.~\ref{fig:bench1}, the thin and segment-aligned initialization generates the smallest final ellipsoid volumes.
It does not improve performance and consistently leads to poor local optima. 
As this elongated ellipsoid introduces overly restrictive geometric constraints, resulting in conservative polytope growth and limited expansion of the free space. 
Even in scenarios where line-segment containment is desired, segment-aligned initialization is suboptimal compared to simple spherical initializations.

\subsection{Benchmark Comparison}
We conduct 500 test cases for each scenario for a complete comparison.
The results are shown in Table~\ref{tab: comparison}.
The proposed method consistently achieves the lowest average computation time while maintaining comparable polytope volumes to FIRI-lite in all evaluated environments. 
Line-segment seeds are more challenging with higher computational cost and smaller polytopes across all methods, as they introduce additional geometric constraints to restrict feasible expansion. 
Despite this increased complexity, the proposed method remains the fastest in all scenarios and preserves full seed inclusion.
Galaxy is unstable in some real maps, and failed trials are excluded from the reported results.

\begin{table}[!ht]
\centering
\vspace{-0.2cm}
\hspace*{-0.1cm}
\caption{Comparison across different scenarios.}
\label{tab: comparison}
\vspace{-0.2cm}
\resizebox{0.99\textwidth}{!}{
\renewcommand\arraystretch{1.15}
\setlength{\tabcolsep}{2.6pt}
\begin{tabular}{p{1.3cm}p{1.5cm}>{\centering\arraybackslash}
p{2.5cm}p{1.4cm}p{2.4cm}p{2.4cm}p{2.4cm}>{\centering\arraybackslash}p{1.2cm}>{\centering\arraybackslash}p{1.7cm}}
\toprule
\textbf{Scenario} 
& \textbf{Seed Type}
& \textbf{Input (\#Local points)} 
& \textbf{Method} 
& \textbf{Time (ms)} 
& $\boldsymbol{\mathrm{vol}(\mathcal{P})\ (m^3)}$ 
& \textbf{\#Planes} 
& \textbf{Collision} 
& \textbf{Seed Inclusion} \\
\hline
\multirow{16}{*}{Maze} 
& \multirow{8}{*}{Point}  
& \multirow{3}{*}{Raw} 
& FIRI-lite    & 2.66 $\pm$ 1.39 & 47.78 $\pm$ 14.86 & 22.77 $\pm$ 4.89 & 0.000 & 1.000 \\
& & & RILS   & 2.05 $\pm$ 1.60 & 33.68 $\pm$ 16.96 & 20.32 $\pm$ 3.84 & 0.000 & 1.000 \\
& &  (8283 $\pm$ 1824) & Galaxy & 4.55 $\pm$ 1.23 & 28.48 $\pm$ 16.08 & 25.49 $\pm$ 7.06 & 0.000 & 1.000 \\
& & & Ours   & \textbf{1.32 $\pm$ 0.26} & 47.56 $\pm$ 14.87 & 21.35 $\pm$ 4.54 & 0.000 & 1.000 \\
\cline{3-9}
& 
& \multirow{3}{*}{Surface Grid} 
& FIRI-lite    & 2.77 $\pm$ 1.43 & 37.49 $\pm$ 14.53 & 19.43 $\pm$ 3.49 & 0.000 & 1.000 \\
& & & RILS   & 4.21 $\pm$ 2.20 & 28.63 $\pm$ 15.77 & 17.50 $\pm$ 3.35 & 0.000 & 1.000 \\
& & (14732 $\pm$ 2214) & Galaxy & 6.25 $\pm$ 1.83 & 18.75 $\pm$ 14.88 & 25.31 $\pm$ 8.82 & 0.002 & 1.000 \\
& & & Ours   & \textbf{2.07 $\pm$ 0.57} & 37.34 $\pm$ 14.60 & 18.99 $\pm$ 3.24 & 0.000 & 1.000 \\
\cline{2-9}
& \multirow{8}{*}{Line}   
& \multirow{3}{*}{Raw} 
& FIRI-lite    & 3.33 $\pm$ 1.44 & 48.79 $\pm$ 15.71 & 23.95 $\pm$ 4.73 & 0.000 & 1.000 \\
& & & RILS   & 2.92 $\pm$ 2.09 & 33.85 $\pm$ 16.67 & 20.41 $\pm$ 4.27 & 0.000 & 1.000 \\
& & (10646 $\pm$ 2290) & Galaxy & 4.93 $\pm$ 1.19 & 27.12 $\pm$ 15.19 & 24.96 $\pm$ 6.72 & 0.000 & 0.996 \\
& & & Ours   & \textbf{1.48 $\pm$ 0.28} & 48.25 $\pm$ 15.45 & 22.70 $\pm$ 4.45 & 0.000 & 1.000 \\
\cline{3-9}
& 
& \multirow{3}{*}{Surface Grid} 
& FIRI-lite    & 3.13 $\pm$ 1.14 & 38.26 $\pm$ 15.79 & 19.77 $\pm$ 3.41 & 0.000 & 1.000 \\
& & & RILS   & 3.78 $\pm$ 1.91 & 29.08 $\pm$ 16.52 & 17.69 $\pm$ 3.27 & 0.000 & 1.000 \\
& & (17527 $\pm$ 2478) & Galaxy & 5.46 $\pm$ 1.31 & 18.68 $\pm$ 15.15 & 24.96 $\pm$ 8.51 & 0.002 & 0.981 \\
& & & Ours   & \textbf{1.94 $\pm$ 0.35} & 38.23 $\pm$ 15.94 & 19.71 $\pm$ 3.49 & 0.000 & 1.000 \\
\hline
\multirow{16}{*}{Obstacle} 
& \multirow{8}{*}{Point}  
& \multirow{3}{*}{Raw} 
& FIRI-lite     & 9.70 $\pm$ 4.14 & 29.16 $\pm$ 13.02 & 29.78 $\pm$ 5.29 & 0.000 & 1.000 \\
& & & RILS   & 8.12 $\pm$ 2.90 & 21.31 $\pm$ 10.78 & 23.41 $\pm$ 3.97 & 0.000 & 1.000 \\
& & (48085 $\pm$ 13766) 
& Galaxy & 5.57 $\pm$ 1.26 & 20.18 $\pm$ 9.42 & 33.33 $\pm$ 6.42 & 0.000 & 0.999 \\
& & & Ours   & \textbf{4.19 $\pm$ 1.11} & 28.63 $\pm$ 13.04 & 27.97 $\pm$ 4.71 & 0.000 & 1.000 \\
\cline{3-9}
& 
& \multirow{3}{*}{Surface Grid} 
& FIRI-lite   & 6.99 $\pm$ 2.57 & 17.57 $\pm$ 10.87 & 29.71 $\pm$ 5.12 & 0.000 & 1.000 \\
& & & RILS   & 5.54 $\pm$ 1.52 & 11.66 $\pm$ 8.47 & 23.64 $\pm$ 3.87 & 0.000 & 1.000 \\
& & (31453 $\pm$ 5573) 
& Galaxy & 5.06 $\pm$ 0.84 & 11.04 $\pm$ 7.36 & 29.04 $\pm$ 6.41 & 0.000 & 1.000 \\
& & & Ours   & \textbf{3.29 $\pm$ 0.69} & 16.67 $\pm$ 10.82 & 28.06 $\pm$ 5.25 & 0.000 & 1.000 \\
\cline{2-9}
& \multirow{8}{*}{Line}   
& \multirow{3}{*}{Raw} 
& FIRI-lite    & 11.92 $\pm$ 4.93 & 28.42 $\pm$ 12.57 & 30.39 $\pm$ 5.02 & 0.000 & 1.000 \\
& & & RILS   & 10.28 $\pm$ 3.74 & 20.26 $\pm$ 9.93 & 23.74 $\pm$ 3.88 & 0.000 & 1.000 \\
& & (60113 $\pm$ 15479) 
& Galaxy & 6.30 $\pm$ 1.45 & 19.87 $\pm$ 9.50 & 32.99 $\pm$ 6.39 & 0.000 & 0.997 \\
& & & Ours   & \textbf{5.00 $\pm$ 1.41} & 27.78 $\pm$ 12.60 & 28.58 $\pm$ 4.84 & 0.000 & 1.000 \\
\cline{3-9}
& 
& \multirow{3}{*}{Surface Grid} 
& FIRI-lite   & 8.54 $\pm$ 3.06 & 22.02 $\pm$ 13.97 & 29.21 $\pm$ 5.00 & 0.000 & 1.000 \\
& & & RILS   & 5.74 $\pm$ 1.55 & 14.57 $\pm$ 10.48 & 23.53 $\pm$ 3.87 & 0.000 & 1.000 \\
& & (34777 $\pm$ 6276) 
& Galaxy & 5.29 $\pm$ 0.88 & 13.82 $\pm$ 9.59 & 29.85 $\pm$ 6.58 & 0.000 & 0.998 \\
& & & Ours   & \textbf{3.53 $\pm$ 0.56} & 21.14 $\pm$ 14.03 & 28.05 $\pm$ 4.76 & 0.000 & 1.000 \\
\hline
\multirow{8}{*}{Real} & \multirow{4}{*}{Point} & \multirow{3}{*}{Raw} & FIRI-lite & 2.26 $\pm$ 4.26 & 136.99 $\pm$ 47.30 & 13.64 $\pm$ 4.78 & 0.000 & 1.000 \\
& & & RILS & 1.36 $\pm$ 2.79 & 127.73 $\pm$ 54.27 & 13.06 $\pm$ 4.12 & 0.000 & 1.000 \\
& & (10618 $\pm$ 20559) & Galaxy & 1.32 $\pm$ 2.05 & 63.67 $\pm$ 39.84 & 28.83 $\pm$ 9.77 & 0.011 & 0.935 \\
& & & Ours & \textbf{1.19 $\pm$ 1.62} & 136.72 $\pm$ 47.31 & 13.67 $\pm$ 4.71 & 0.000 & 1.000 \\
\cline{2-9}
& \multirow{4}{*}{Line} & \multirow{3}{*}{Raw} & FIRI-lite & 3.45 $\pm$ 7.73 & 169.30 $\pm$ 62.97 & 14.35 $\pm$ 4.67 & 0.000 & 1.000 \\
& & & RILS & 2.20 $\pm$ 5.16 & 156.60 $\pm$ 72.17 & 13.56 $\pm$ 3.99 & 0.000 & 1.000 \\
& & (15085 $\pm$ 28657) & Galaxy & 1.81 $\pm$ 3.33 & 80.12 $\pm$ 54.09 & 28.58 $\pm$ 8.86 & 0.013 & 0.910 \\
& & & Ours & \textbf{1.67 $\pm$ 2.58} & 168.87 $\pm$ 63.24 & 14.27 $\pm$ 4.58 & 0.000 & 1.000 \\
\bottomrule
\end{tabular}
}
\end{table}

\begin{figure*}[!th]
      \centering
       \vspace{-0.0cm}
    \includegraphics[width=1\columnwidth]{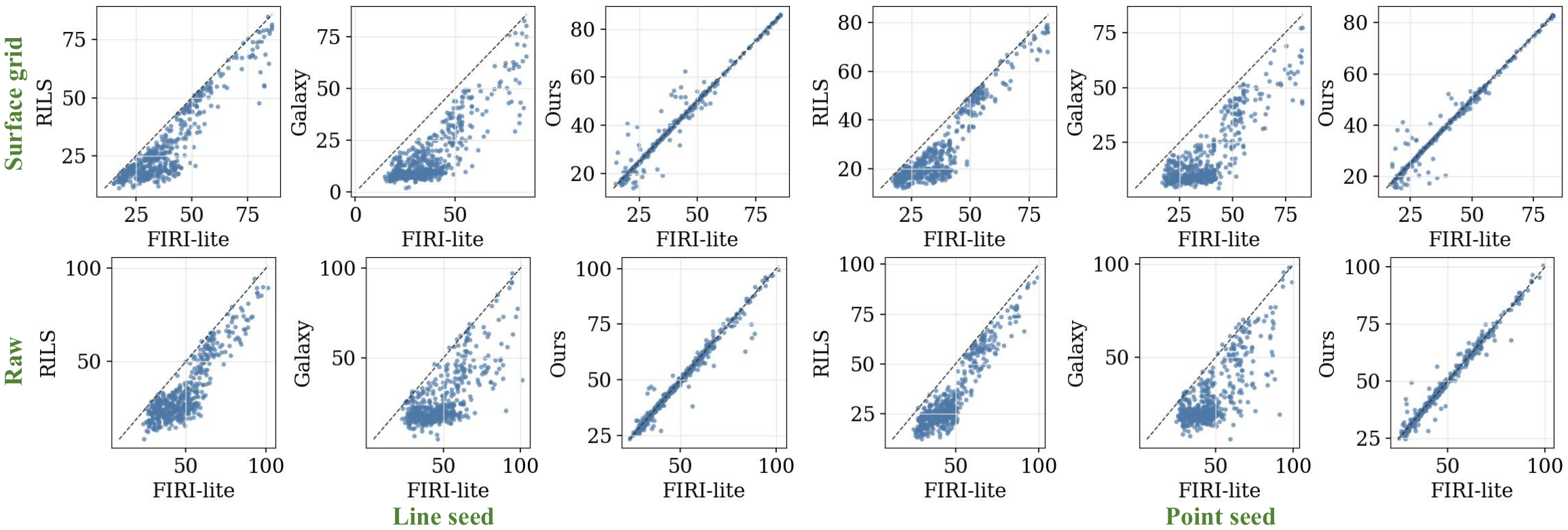}
    \vspace{-0.7cm}
      \caption{Pairwise comparison of polytope volumes in maze maps with FIRI-lite as the x-axis reference. }
      \label{fig: result}
      \vspace{-0.1cm}
\end{figure*}

To further illustrate the volume differences across each test, we plot the pairwise comparisons against FIRI-lite on maze maps in Fig.~\ref{fig: result}. 
Our method achieves performance similar to FIRI-lite, whereas Galaxy and RILS consistently underperform it. 
Although these two methods reduce computation time, this improvement is achieved at the expense of solution quality, while our method maintains performance without such degradation.

\subsection{Application to Agile Quadrotor Planning}

To further evaluate the ability for real-world deployment, we study the proposed framework under two challenging conditions, including single convex polytope generation from sparse, noisy incremental sensory updates and large-scale SFC construction for trajectory optimization.

For single polytope generation, we test on real event camera data collected on quadrotor platforms~\cite{zhu2018multivehicle}. 
Event cameras generate data with microsecond temporal resolution and low latency, which requires fast computation and robustness to sparse measurements.
We obtain depth estimates from stereo event camera data using the method of~\cite{niu2025esvo2}, with an average computation time of approximately $10$ ms.
Fig.~\ref{fig: pc2} shows convex polytopes constructed from event camera data over a short temporal window.
\begin{figure*}[!th]
      \centering
       \vspace{-0.1cm}
    \includegraphics[width=1\columnwidth]{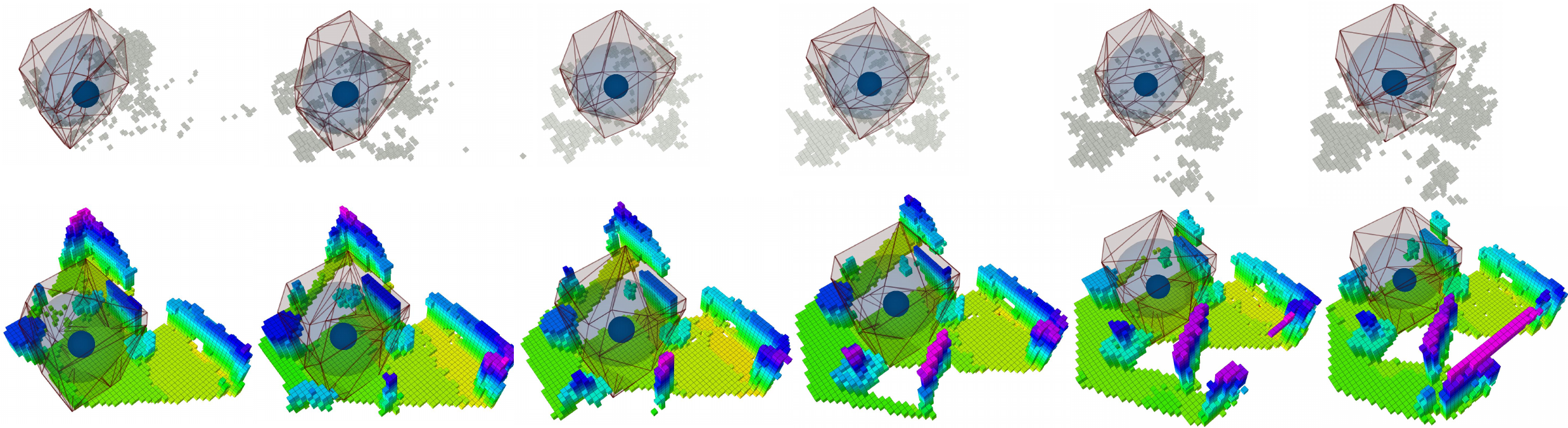}
    \vspace{-0.7cm}
      \caption{Convex polytopes over increasing temporal windows. The top row uses event-based data, and the bottom row uses ground-truth LiDAR.}
      \label{fig: pc2}
      \vspace{-0.1cm}
\end{figure*}

In a $10 \ \textrm{m} \times 10 \ \textrm{m}  \times 2 \ \textrm{m} $ workspace, we evaluate 200 trials with a 1 m local range and an average $229.26 \pm 413.97$ points. 
Our method achieves a runtime of $0.17 \pm 0.14$ ms and produces polytope volume $7.11 \pm 0.92$ $\text{m}^3$. 
We generate convex regions from event camera point clouds and evaluate collisions against the ground-truth point cloud. The average rate of volume in collision is $0.0226 \pm 0.0257$.
The results demonstrate that the proposed method achieves low-latency computation and enables fast incremental updates for agile planning.

For trajectory and SFC generation, the proposed method is integrated into a parallel optimization framework in~\cite{10935632}.
We employ global LiDAR data to generate SFCs with predefined waypoints.
The SFC by our method enables continuous piecewise polynomial trajectory optimization, as shown in Fig.~\ref{fig:mp}, and supports real-time deployment on onboard quadrotor hardware.

\begin{figure*}[!th]
      \centering
       \vspace{-0.0cm}
    \includegraphics[width=1\columnwidth]{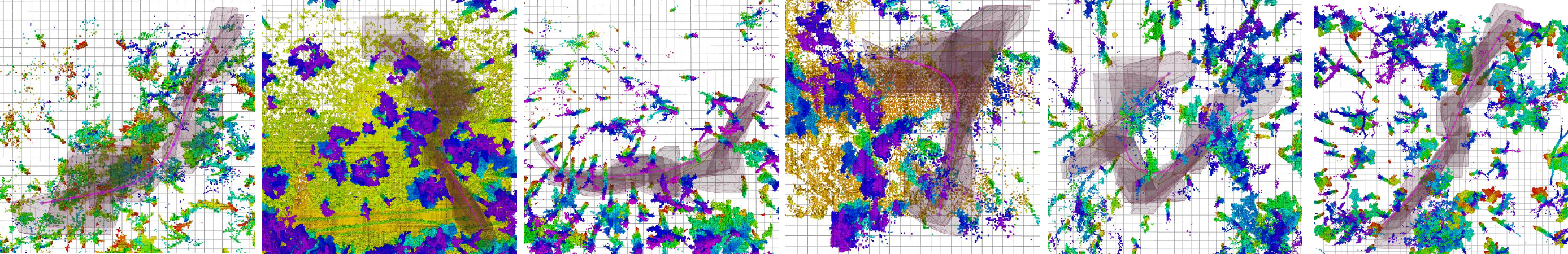}
    \vspace{-0.6cm}
      \caption{Piecewise polynomial trajectories (in magenta) optimized within SFC.}
      \label{fig:mp}
      \vspace{-0.4cm}
\end{figure*}

\section{Conclusion and Future Work} 
\label{sec:conclusion}

We propose a framework for superfast polytope inflation that leverages starshaped set as an efficient filtering to accelerate polytope extraction. 
By filtering obstacle points that are likely to become active supporting constraints, the proposed approach reduces the number of candidates considered during separating hyperplane generation.
The alternating polytope and ellipsoid procedure can be interpreted as an optimization over a reduced set of candidate normals, with the inscribed ellipsoid as a heuristic to guide normal selection rather than as a strict geometric approximation.
This design reduces redundant computation and improves robustness when operating on noisy sensor data and large-scale point clouds in real-world environments.
Future work will focus on GPU acceleration and learning-based filtering to further improve closed-loop onboard performance, especially for navigation with an event camera.

\vspace{-0.2cm}
\section*{Acknowledgments}

We greatly acknowledge the support of TILOS, funded by NSF Grant CCR-2112665, as well as the support from Pratik Chaudhari. 
We used language models (e.g., GPT) for text and code refinement.

\bibliography{references}
\end{document}